\documentclass[12pt]{amsart}
\usepackage[margin=1.2in]{geometry}
\usepackage{amsmath,amssymb,amsthm}
\usepackage{graphicx}
\usepackage{booktabs}
\usepackage{multirow}
\usepackage{float}
\usepackage{microtype}
\usepackage{url}      
\usepackage{placeins}

\newtheorem{theorem}{Theorem}

\newtheorem{proposition}[theorem]{Proposition}
\newtheorem{corollary}[theorem]{Corollary}
\newtheorem{definition}[theorem]{Definition}
\newtheorem{remark}[theorem]{Remark}

\title [Partition of Unity Neural Networks]{Partition of Unity Neural Networks for Interpretable Classification with Explicit Class Regions}

\author[A. Aldroubi]{Akram Aldroubi}
\address{Department of Mathematics\\Vanderbilt University}
\email{akram.aldroubi@vanderbilt.edu}
\subjclass[2020]{68T07, 41A30, 62H30}
\keywords{partition of unity, neural networks, universal approximation, probability simplex, classification, interpretability}
\begin{document}

\begin{abstract}

We introduce \emph{Partition of Unity Neural Networks} (PUNNs), a
neural-network architecture for multiclass classification based on the
classical mathematical notion of a partition of unity. The starting
point is the observation that the characteristic functions of ideal
class regions form a partition of unity. PUNNs replace these
discontinuous indicators by learned continuous functions
\[
h_1,\ldots,h_C:\mathcal X\to[0,1]
\]
whose sum is identically one and whose values are interpreted directly
as class probabilities.

The partition functions are generated through a recursive family of
input-dependent gates. This construction guarantees nonnegative class
probabilities summing to one without a separate normalization layer
such as softmax, while providing an explicit ordered factorization of
each probability in terms of the gate values. The resulting gate trace
gives an interpretable representation of how individual class
probabilities are formed. The framework also allows multiple partition
functions to represent a single class and admits both neural-network
and geometry-informed realizations of the gates.

We prove that PUNNs are dense in the space of continuous probability
maps from compact subsets of $\mathbb R^d$ into the probability
simplex. Thus, the recursive partition-of-unity structure retains
universal approximation of continuous probabilistic classifiers,
including maps whose components may vanish.

Numerical experiments on synthetic datasets, MNIST, and CIFAR-100
illustrate the learned partitions, the effect of class ordering, and
the use of multiple partition components per class. We also develop shape-informed gates that incorporate geometric information directly; when the chosen geometry is well matched to the class regions, these models achieve comparable accuracy with substantially fewer trainable parameters.
\end{abstract}

\maketitle

\section{Introduction}

Suppose that a classification problem has $C$ possible classes and that
each observation is represented by a feature vector
\[
x\in\mathbb R^d.
\]
The goal of a classifier is to determine which of the $C$ classes
$x$ belongs to.  More generally, to associate
with $x$ a probability vector
\[
p(x)
=
\bigl(p_1(x),\ldots,p_C(x)\bigr),
\]
where
\[
p_i(x)=P(Y=i\mid x),
\qquad i=1,\ldots,C,
\]
with
\[
p_i(x)\geq 0
\qquad\text{and}\qquad
\sum_{i=1}^C p_i(x)=1.
\]
The predicted class for $x$ is then typically taken to be
\[
\widehat y(x)
=
\operatorname*{arg\,max}_{1\leq i\leq C} p_i(x).
\]

A standard approach is to use a multilayer perceptron (MLP) \cite{goodfellow2016deep}. The input
$x$ is passed through a sequence of affine transformations followed by
nonlinear activation functions. Setting
\[
u^{(0)}(x)=x,
\]
the hidden layers are defined recursively by
\[
u^{(\ell)}(x)
=
\sigma_\ell\!\left(
W_\ell u^{(\ell-1)}(x)+b_\ell
\right),
\qquad
\ell=1,\ldots,L,
\]
where $W_\ell$ is the weight matrix and $b_\ell$ is the bias vector of
the $\ell$th layer, of dimensions compatible with the input and output
dimensions of that layer. The activation function $\sigma_\ell$ is
applied componentwise. A common choice is the rectified linear unit
(ReLU),
\[
\sigma_{\mathrm{ReLU}}(t)
=
\max\{0,t\}.
\]

The final affine layer produces the vector
\[
z(x)
=
W_{L+1}u^{(L)}(x)+b_{L+1}
=
\bigl(z_1(x),\ldots,z_C(x)\bigr)
\in\mathbb R^C,
\]
where $W_{L+1}$ and $b_{L+1}$ are the weight matrix and bias vector of
the output layer. The components $z_i(x)$ are called the \emph{logits}.
They may be viewed as class scores, but they are not themselves
probabilities: they need not be nonnegative and do not in general sum
to one.

To obtain class probabilities, the logits are typically passed through
the softmax map \cite{bishop2006pattern},
\[
P(Y=i\mid x)
=
\frac{e^{z_i(x)}}{\sum_{j=1}^C e^{z_j(x)}},
\qquad i=1,\ldots,C.
\]
The resulting quantities are nonnegative and satisfy
\[
\sum_{i=1}^C P(Y=i\mid x)=1,
\]
so they define a probability distribution over the $C$ classes. The
predicted class is then the one having the largest probability.

Given labeled training samples
\[
\{(x_n,y_n)\}_{n=1}^N,
\qquad
y_n\in\{1,\ldots,C\},
\]
the entries of the weight matrices $W_\ell$ and bias vectors $b_\ell$
are unknown parameters of the model. These parameters are learned from
the training data by minimizing a loss function. A standard choice is
the negative log-likelihood, equivalently the cross-entropy loss,
\[
\mathcal L
=
-\frac{1}{N}
\sum_{n=1}^N
\log P(Y=y_n\mid x_n).
\]
For the softmax classifier, this becomes
\[
\mathcal L
=
-\frac{1}{N}
\sum_{n=1}^N
\log
\left(
\frac{e^{z_{y_n}(x_n)}}
{\sum_{j=1}^C e^{z_j(x_n)}}
\right).
\]
The weight matrices $W_\ell$ and bias vectors $b_\ell$ are then chosen to minimize $\mathcal L$ over the training data.

Thus, in the standard construction, the passage from an input to a
classification may be summarized schematically as
\[
x
\longmapsto
\text{neural network}
\longmapsto
z(x)
\longmapsto
\text{softmax probabilities}
\longmapsto
\text{predicted class}.
\]
There is, however, another natural way to view the classification
problem. Ideally, the feature space is divided into regions
\[
\Omega_1,\ldots,\Omega_C
\]
corresponding to the $C$ classes. If these regions form a partition of
the feature space, then their characteristic functions
\[
\chi_{\Omega_1},\ldots,\chi_{\Omega_C}
\]
satisfy
\[
\chi_{\Omega_i}(x)\geq 0
\]
and
\[
\sum_{i=1}^C \chi_{\Omega_i}(x)=1
\]
at every point $x$. Thus, the characteristic functions themselves form
a partition of unity and provide an ideal deterministic classifier:
\[
\chi_{\Omega_i}(x)
=
\begin{cases}
1, & x\in\Omega_i,\\
0, & x\notin\Omega_i.
\end{cases}
\]

For learning from finite and possibly noisy data, such characteristic
functions are too rigid. The class regions are not known in advance,
their boundaries must be inferred from data, and a sample near a class
boundary may not naturally have probability exactly zero or one.
Moreover, characteristic functions are discontinuous at the boundaries
of the regions. This suggests approximating the characteristic functions of the class
regions by smooth functions that retain their essential behavior: each
function should be close to one on the region associated with its class,
close to zero away from that region, and the collection should sum to
one. Thus, we seek functions
\[
h_1,\ldots,h_C:\mathcal X\to[0,1]
\]
such that
\[
\sum_{i=1}^C h_i(x)=1.
\]
Such a family is precisely a partition of unity, a standard construction
in mathematics for replacing hard decompositions of a space by smooth
overlapping functions subordinate to those regions.

In the classification setting, the functions $h_i$ may therefore be
viewed as smooth approximations to the characteristic functions
$\chi_{\Omega_i}$, and at the same time interpreted probabilistically as
\[
P(Y=i\mid x)=h_i(x).
\]
This leads to the viewpoint that classification can be formulated as
the problem of learning a smooth partition of unity adapted to the
geometry of the feature space. We
introduce \emph{Partition of Unity Neural Networks} (PUNNs), in which
the class-probability functions themselves form a learned partition of
unity. Rather than first producing an unconstrained vector of logits
and then normalizing it by softmax, PUNN constructs the probability
functions directly.

\subsection{Overview of the PUNN Architecture}

The PUNN architecture constructs the class-probability functions from
a collection of input-dependent gates
\[
q_1,\ldots,q_{C-1}:\mathcal X\to[0,1].
\]
These gates are combined recursively to produce functions
\[
h_1,\ldots,h_C:\mathcal X\to[0,1]
\]
that satisfy
\[
\sum_{i=1}^C h_i(x)=1
\]
for every $x\in\mathcal X$. The functions $h_i$ are interpreted
directly as the class probabilities.

Thus, the PUNN classification pipeline may be summarized schematically as
\[
x
\longmapsto
\bigl(q_1(x),\ldots,q_{C-1}(x)\bigr)
\longmapsto
\bigl(h_1(x),\ldots,h_C(x)\bigr)
\longmapsto
\text{predicted class}.
\]

The recursive construction is defined precisely in
Section~\ref{sec:architecture}. It guarantees nonnegative outputs that
sum to one by construction and yields an explicit factorization of the
class probabilities through the gate values.

The gates may be realized by separate neural networks, by a shared
multi-output neural network, or by low-dimensional geometric
parameterizations. We also show that the resulting PUNN class is dense
in the space of continuous probability maps from compact domains into
the probability simplex.

\subsection{Contributions}
The main contributions of this work are the following:
\begin{itemize}

\item \textbf{PUNN architecture.}
We introduce a neural-network architecture for multiclass
classification in which the class probabilities are represented
directly as the components of a learned partition of unity. The
partition is generated through a recursive gate construction, so that
the outputs are nonnegative and sum to one by construction, without a
softmax normalization layer.

\item \textbf{Recursive parameterization.}
We introduce a recursive gate construction that produces such a
partition of unity automatically. The construction admits separate
neural-network gates, shared multi-output neural realizations, and
geometry-informed gate families.

\item \textbf{Approximation theory.}
We prove that the resulting PUNN class is dense in the space of
continuous probability maps from compact domains into the probability
simplex.

\item \textbf{Interpretable gate trace.}
The recursive construction yields an explicit ordered factorization of
each class probability into gate values and complementary factors.
For each input, this produces a gate-level trace showing how the
corresponding class probabilities are formed. The trace can be used to
analyze individual predictions and identify which gates most strongly
contribute to a classification decision.

\item \textbf{Numerical illustrations.}
We study the construction on synthetic datasets, MNIST, and CIFAR-100.
The experiments illustrate the learned partition functions, the effect
of class ordering, extensions with multiple partition functions per
class, and geometry-informed parameterizations.
\end{itemize}

\section{Related Work}
\label{sec:related}

\subsection{Partitions of Unity in Approximation and Learning}

Partitions of unity are classical tools in analysis, topology, and
differential geometry, where collections of nonnegative functions that
sum to one are used to localize and combine constructions
\cite{Lee2013}. They also play an important role in approximation
theory and numerical analysis, including partition-of-unity methods for
combining local approximants \cite{BabuskaMelenk1997}. Related
partition constructions also arise in harmonic analysis, for example
through families of functions whose dilates or translates resolve the
identity.

More recently, partitions of unity have appeared in neural-network
approximation theory, although for purposes different from the
classification framework considered here. Shaham, Cloninger, and
Coifman \cite{shaham2018provable} use a partition of unity on a
low-dimensional manifold to assemble local neural-network
approximations into a global approximation. Lee et al.\
\cite{lee2021pounets} introduce Partition of Unity Networks, in which a
learned partition of the input domain is combined with local polynomial
approximation spaces. More recently, Shi and Liao
\cite{shi2026transformers} use a softmax-generated partition of unity
to combine local approximations in their approximation theory for
Transformer networks. In these works, the partition of unity is used
primarily to localize or aggregate approximations. In PUNNs, by
contrast, the components of the learned partition of unity are
themselves the class-probability functions.

\subsection{Related Neural Classification Architectures}

Several neural architectures produce structured outputs through learned
gating or hierarchical mechanisms. Examples include mixture-of-experts
models, gating networks, and hierarchical classifiers
\cite{jacobs1991adaptive,jordan1994hierarchical,
shazeer2017outrageously,kontschieder2015deep}.

PUNNs are related to these constructions at a broad structural level,
but the role of the gates is different. In mixture-of-experts models, a
gating network typically produces weights used to combine the outputs
of several expert networks. In PUNNs, the gates do not select or combine
expert predictions. Instead, they parameterize the components of a
partition of unity, and these components are themselves the class
probabilities.

Thus, the partition of unity in a PUNN is the output structure of the
classifier rather than a routing mechanism between learned experts.
The specific recursive parameterization of this partition is developed
in the next section.
\section{The PUNN Architecture}
\label{sec:architecture}
\subsection{Recursive Partition-of-Unity Construction}
\label{subsec:recursive_pou}

Let $\mathcal X$ denote the feature space, and let
\[
q_i:\mathcal X\to[0,1],
\qquad i=1,\ldots,k-1,
\]
be a collection of gate functions. These gates recursively parameterize
$k$ partition functions
\[
h_1,\ldots,h_k:\mathcal X\to[0,1],
\]
defined by
\begin{align}
h_1(x)
&=
q_1(x),
\label{eq:pou_h1}\\
h_i(x)
&=
\left(
\prod_{j=1}^{i-1}(1-q_j(x))
\right)q_i(x),
\qquad i=2,\ldots,k-1,
\label{eq:pou_hi}\\
h_k(x)
&=
\prod_{j=1}^{k-1}(1-q_j(x)).
\label{eq:pou_hk}
\end{align}

For each fixed $x\in\mathcal X$, the values
$h_1(x),\ldots,h_k(x)$ are assembled algebraically from the gate
values $q_1(x),\ldots,q_{k-1}(x)$. For $i<k$, the component $h_i(x)$
is formed from the output of gate $i$ together with the complementary
outputs of the preceding gates, while the final component $h_k(x)$ is
formed from the complementary outputs of all $k-1$ gates. Thus, the
ordering determines the factorization of the partition components; it
does not imply that the gates are trained one at a time.

\begin{proposition}[Partition-of-Unity Property]
\label{prop:partition}
Let $k\geq 2$. For every collection of gates
\[
q_i:\mathcal X\to[0,1],
\qquad i=1,\ldots,k-1,
\]
the functions defined by
\eqref{eq:pou_h1}--\eqref{eq:pou_hk} satisfy
\[
h_i(x)\geq 0,
\qquad i=1,\ldots,k,
\]
and
\[
\sum_{i=1}^k h_i(x)=1
\]
for every $x\in\mathcal X$.
\end{proposition}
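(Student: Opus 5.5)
Fix $x\in\mathcal X$ and abbreviate $q_j=q_j(x)$, so the statement becomes a purely algebraic identity about numbers $q_1,\ldots,q_{k-1}\in[0,1]$. Nonnegativity is immediate. Each $h_i$ in \eqref{eq:pou_h1}--\eqref{eq:pou_hk} is a finite product of factors of the form $q_i$ or $1-q_j$. Since $q_j\in[0,1]$, each such factor lies in $[0,1]$, so every $h_i$ lies in $[0,1]$; in particular $h_i\geq 0$.

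For the sum, I will introduce the ``remaining mass'' quantities
\[
R_0=1,\qquad R_i=\prod_{j=1}^{i}(1-q_j),\quad i=1,\ldots,k-1,
\]
with the empty product equal to $1$. Then \eqref{eq:pou_h1}--\eqref{eq:pou_hi} read uniformly as $h_i=R_{i-1}q_i$ for $1\le i\le k-1$, and \eqref{eq:pou_hk} reads $h_k=R_{k-1}$. The key observation is the one-step identity
\[
R_{i-1}-R_i=R_{i-1}\bigl(1-(1-q_i)\bigr)=R_{i-1}q_i=h_i .
\]
Summing over $i=1,\ldots,k-1$ telescopes:
\[
\sum_{i=1}^{k-1}h_i=R_0-R_{k-1}=1-R_{k-1}.
\]
Adding $h_k=R_{k-1}$ then gives $\sum_{i=1}^k h_i=1$.

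An equivalent route is induction on $k$, proving the stronger statement $\sum_{i=1}^{m}h_i=1-R_m$ for $m\le k-1$. I prefer the telescoping form because it is shorter.

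There is no real obstacle here. The only point needing care is bookkeeping: the case $i=1$ must fit the general formula via the empty-product convention, and the edge case $k=2$ must work. For $k=2$ the claim is simply $h_1+h_2=q_1+(1-q_1)=1$, which is consistent with the telescoping argument. Since $x$ was arbitrary, both properties hold pointwise on all of $\mathcal X$, and no continuity or other regularity of the gates is required.
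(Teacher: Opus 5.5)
Your proof is correct and follows the paper's argument: nonnegativity comes from every factor lying in $[0,1]$, and the sum comes from the telescoping identity $\sum_{i=1}^{m}h_i=1-\prod_{j=1}^{m}(1-q_j)$ followed by adding $h_k$. Your one-step identity $R_{i-1}-R_i=h_i$ also supplies the justification of that telescoping identity, which the paper only states.
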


\begin{proof}
Since $0\leq q_i(x)\leq 1$, each $h_i(x)$ is nonnegative. Moreover,
\begin{align*}
\sum_{i=1}^k h_i(x)
&=
\sum_{i=1}^{k-1}
\left(
\prod_{j=1}^{i-1}(1-q_j(x))
\right)q_i(x)
+
\prod_{j=1}^{k-1}(1-q_j(x))\\
&=
1.
\end{align*}
Indeed, the sum telescopes according to
\[
\sum_{i=1}^{m}
\left(
\prod_{j=1}^{i-1}(1-q_j(x))
\right)q_i(x)
=
1-\prod_{j=1}^{m}(1-q_j(x)),
\qquad m=1,\ldots,k-1.
\]
Taking $m=k-1$ gives the result.
\end{proof}

\begin{remark}[Regularity]
If the gates $q_1,\ldots,q_{k-1}$ are continuous, then the resulting
partition functions $h_1,\ldots,h_k$ are continuous. Similarly, if the
gates are differentiable or smooth, then the partition functions have
the same regularity.
\end{remark}

\begin{remark}
\label{rmk:gate_realization_independence}
The partition-of-unity property depends only on the recursive
construction and not on the particular realization of the gates. Thus,
any collection of functions
\[
q_i:\mathcal X\to[0,1],
\qquad i=1,\ldots,k-1,
\]
produces a valid partition of unity through
\eqref{eq:pou_h1}--\eqref{eq:pou_hk}. Neural-network and geometry-informed realizations of the gates are introduced in Section~\ref{sec:realizations}.
\end{remark}

\subsection{PUNN for Classification}
\label{subsec:punn_classification}

For a classification problem with $C$ classes, the basic construction
takes $k=C$ partition functions and associates one partition function
with each class. The generalization to $k>C$ is discussed in
Remark~\ref{rmk:multimodal} below. The conditional class probabilities
are defined directly by
\[
P(Y=i\mid x)
=
h_i(x),
\qquad i=1,\ldots,C.
\]

By Proposition~\ref{prop:partition}, these class probabilities are
nonnegative and satisfy
\[
\sum_{i=1}^C P(Y=i\mid x)
=
1
\]
for every $x\in\mathcal X$. Thus, the partition functions themselves
form the class-probability map, and no additional normalization such
as softmax is required.

The predicted class is
\[
\widehat y(x)
=
\operatorname*{arg\,max}_{1\leq i\leq C} h_i(x).
\]

As in the standard probabilistic classifier described in the
Introduction, training is based on the negative log-likelihood. Since
the PUNN class probabilities are given directly by
$P(Y=i\mid x)=h_i(x)$, the loss becomes
\[
\mathcal L
=
-\frac{1}{N}
\sum_{n=1}^N
\log h_{y_n}(x_n).
\]
All gate parameters are optimized jointly through $\mathcal L$. Thus,
the distinction from the standard softmax classifier lies in the
parameterization of the class probabilities, not in the choice of loss
function. The class ordering determines the algebraic factorization of
the probabilities, not the order in which the gates are trained.

\begin{remark}[Extension to Multimodal Classes]
\label{rmk:multimodal}
The number of partition functions need not equal the number of classes.
If a class occupies several disjoint regions in the feature space, one may take
$k>C$ and assign several partition functions to the same class. Let
\[
\pi:\{1,\ldots,k\}\to\{1,\ldots,C\}
\]
map each partition index to a class label. The class probabilities are
then defined by
\[
P(Y=c\mid x)
=
\sum_{i:\,\pi(i)=c} h_i(x),
\qquad c=1,\ldots,C.
\]
Since the functions $h_i$ are nonnegative and sum to one, the
aggregated class probabilities are also nonnegative and sum to one.

In this case, the negative log-likelihood uses the aggregated
probability of the true class:
\[
\mathcal L
=
-\frac{1}{N}
\sum_{n=1}^N
\log\left(
\sum_{i:\,\pi(i)=y_n} h_i(x_n)
\right).
\]
This allows distinct partition functions to represent different
regions or modes associated with the same class.
\end{remark} 

\section{Theoretical Analysis}
\label{sec:theory}

\subsection{Approximation of Probability Maps}

Universal approximation theorems establish that standard neural
networks can approximate continuous real-valued functions on compact
domains to arbitrary precision. Classical results include
\cite{cybenko1989approximation} for sigmoidal networks and
\cite{hornik1989multilayer} for more general activation functions.

Our analysis concerns approximation within a structured function class
appropriate for probabilistic classification. Rather than approximating
arbitrary real-valued functions, we consider continuous
\emph{probability maps}, namely, continuous functions taking values in
the probability simplex. We first establish the approximation result for
maps taking values in the relative interior of the simplex, and then
extend it to arbitrary continuous probability maps by approximation.
Thus, the recursive partition-of-unity parameterization retains
universal approximation for continuous probability maps taking values
in the entire probability simplex.

\subsection{Density of PUNNs in the Space of Continuous Probability Maps}

\begin{definition}[$(k-1)$-Simplex]
The $(k-1)$-dimensional probability simplex is defined by
\[
\Delta^{k-1}
=
\left\{
(p_1,\dots,p_k)\in\mathbb{R}^k
:
p_i\geq 0 \text{ for all } i,
\quad
\sum_{i=1}^k p_i=1
\right\}.
\]
Its relative interior is
\[
\operatorname{ri}(\Delta^{k-1})
=
\left\{
(p_1,\dots,p_k)\in\mathbb{R}^k
:
p_i>0 \text{ for all } i,
\quad
\sum_{i=1}^k p_i=1
\right\}.
\]
\end{definition}

\begin{definition}[Probability Map]
Let $K\subset\mathbb{R}^d$ be compact. A function
\[
p:K\to\Delta^{k-1}
\]
is called a \emph{probability map}. If each coordinate function $p_i$
is continuous, then $p$ is called a \emph{continuous probability map}.
\end{definition}

\begin{remark}
Classical universal approximation results can also be used to
approximate simplex-valued targets with a neural network followed by a
softmax output layer. The question addressed here is therefore not
whether neural networks can approximate probability maps, but whether
the recursive partition-of-unity parameterization used by PUNNs
retains this approximation property. In this parameterization, the
class probabilities are assembled recursively from input-dependent
gates rather than obtained by globally normalizing a vector of logits.
Theorem~\ref{thm:punn_density} establishes the approximation result for
continuous maps into $\operatorname{ri}(\Delta^{k-1})$, and
Corollary~\ref{cor:punn_density_simplex} extends it to continuous maps
into the full simplex $\Delta^{k-1}$.
\end{remark}

\begin{theorem}
\label{thm:punn_density}
Let $K\subset\mathbb{R}^d$ be compact, and let
\[
p:K\to\operatorname{ri}(\Delta^{k-1})
\]
be a continuous probability map. Let
\[
g:\mathbb{R}\to(0,1)
\]
be a continuous strictly monotone bijection, and let
$\mathcal{N}\subset C(K)$ be a class of feedforward neural networks that is
dense in $C(K)$ with respect to the uniform norm.

For $\theta_1,\ldots,\theta_{k-1}\in\mathcal{N}$, define the PUNN partition
functions by
\begin{align*}
h_1(x)
&=
g(\theta_1(x)),\\
h_i(x)
&=
\left(
\prod_{j=1}^{i-1}
\bigl(1-g(\theta_j(x))\bigr)
\right)
g(\theta_i(x)),
\qquad i=2,\ldots,k-1,\\
h_k(x)
&=
\prod_{j=1}^{k-1}
\bigl(1-g(\theta_j(x))\bigr).
\end{align*}

Then, for every $\varepsilon>0$, there exist
$\theta_1,\ldots,\theta_{k-1}\in\mathcal{N}$ such that
\[
\max_{1\leq i\leq k}
\|h_i-p_i\|_{C(K)}
<
\varepsilon.
\]
\end{theorem}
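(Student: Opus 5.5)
The plan is to invert the recursive construction exactly on the target $p$, approximate the resulting gate preimages by networks, and then push the error back through the recursion using continuity.

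\textbf{Exact gates for $p$.} Define
\[
r_i(x)=\sum_{j=i}^k p_j(x),\qquad i=1,\ldots,k,
\]
and set
\[
q_i^*(x)=\frac{p_i(x)}{r_i(x)},\qquad i=1,\ldots,k-1.
\]
Since $p$ takes values in $\operatorname{ri}(\Delta^{k-1})$, every $p_j>0$ on $K$. Hence $r_i\geq p_k>0$, so each $q_i^*$ is continuous. Moreover $0<q_i^*<1$, because $r_i-p_i=r_{i+1}\geq p_k>0$.

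\textbf{Recovering $p$ from $q^*$.} One checks that $1-q_i^*=r_{i+1}/r_i$. The product over $j<i$ therefore telescopes to $r_i/r_1=r_i$, using $r_1=1$. Consequently the recursion with gates $q^*$ returns $h_i=r_i\,q_i^*=p_i$ for $i<k$, and $h_k=r_k=p_k$.

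\textbf{Compactness margin.} By compactness and continuity, there is $\delta>0$ with $q_i^*(K)\subset[\delta,1-\delta]$ for all $i$. Set
\[
\phi_i=g^{-1}\circ q_i^*.
\]
This is continuous on $K$, because $g$ is a continuous strictly monotone bijection $\mathbb R\to(0,1)$, so $g^{-1}$ is continuous on $(0,1)$. Each $\phi_i$ takes values in the compact interval $I=g^{-1}([\delta,1-\delta])$.

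\textbf{Approximating the gates.} On the compact interval $I_1=\{t:\operatorname{dist}(t,I)\leq1\}$, $g$ is uniformly continuous. Given $\eta>0$, choose $\rho\in(0,1)$ so that $|s-t|<\rho$ with $s,t\in I_1$ implies $|g(s)-g(t)|<\eta$. By density of $\mathcal N$, pick $\theta_i\in\mathcal N$ with $\|\theta_i-\phi_i\|_{C(K)}<\rho$. Then $\theta_i(x)\in I_1$ for all $x\in K$, and
\[
\|g\circ\theta_i-q_i^*\|_{C(K)}<\eta .
\]

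\textbf{Stability of the recursion.} This is the key estimate. Each $h_i$ is a product of at most $k-1$ factors from $[0,1]$, each of the form $a_j$ or $1-a_j$. For such products we have
\[
\Bigl|\prod a_j-\prod b_j\Bigr|\leq\sum_j|a_j-b_j|,
\]
proved by the telescoping identity $\prod a-\prod b=\sum_m(\prod_{j<m}a_j)(a_m-b_m)(\prod_{j>m}b_j)$. Applying this with the network gates against $q^*$ gives, pointwise on $K$,
\[
|h_i-p_i|\leq (k-1)\eta .
\]
Choosing $\eta=\varepsilon/k$ yields $\max_i\|h_i-p_i\|_{C(K)}<\varepsilon$.

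The main subtlety is controlling $g^{-1}$. It is unbounded near $0$ and $1$, so the network targets are continuous and bounded only because $p$ maps into the relative interior, which gives the margin $\delta$. This is exactly why the theorem is restricted to $\operatorname{ri}(\Delta^{k-1})$, and why the boundary case is deferred to the corollary.
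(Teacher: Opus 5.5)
Your proposal is correct and follows the paper's proof essentially step for step. You use the same stick-breaking gates $p_i/\sum_{j\ge i}p_j$ composed with $g^{-1}$, uniform network approximation of these targets, uniform continuity of $g$ on a compact neighborhood of their range, and the telescoping-product bound. Your version is slightly more complete: you verify $h_i=p_i$ via $1-q_i^*=r_{i+1}/r_i$, and you state the general product estimate with the explicit constant $(k-1)\eta$, whereas the paper calls the former a direct computation and writes out the latter only for $h_2$.
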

\begin{proof}
Let
\[
p=(p_1,\dots,p_k):K\to\operatorname{ri}(\Delta^{k-1})
\]
be a continuous probability map.
In particular,
\[
p_i(x) > 0
\quad \text{and} \quad
\sum_{i=1}^k p_i(x) = 1
\quad \text{for all } x \in K.
\]

\medskip
\noindent
Define continuous functions $\gamma_1,\dots,\gamma_{k-1} : K \to (0,1)$ recursively by
\begin{align*}
\gamma_1(x) &= p_1(x), \\
\gamma_i(x) &= \frac{p_i(x)}{\sum_{j=i}^k p_j(x)},
\quad i = 2,\dots,k-1.
\end{align*}
Since $p$ is continuous and $\sum_{j=i}^k p_j(x) > 0$ for all $x$, each $\gamma_i$ is continuous and takes values in $(0,1)$.

\medskip
\noindent
Let $g : \mathbb{R} \to (0,1)$ be a continuous strictly monotone bijection, and define
\[
q_i(x) := g(\phi_i(x)),
\quad \text{where } \phi_i(x) := g^{-1}(\gamma_i(x)).
\]

\medskip
\noindent
Since $g$ is strictly monotone and continuous, it follows that $g^{-1}$ is also monotone and continuous and that $\phi_i := g^{-1}\circ \gamma_i$ is continuous on $K$.

\medskip
\noindent
Define the PUNN partition functions $h_1,\dots,h_k$ by
\begin{align*}
h_1(x) &= q_1(x), \\
h_i(x) &= \left( \prod_{j=1}^{i-1} (1 - q_j(x)) \right) q_i(x),
\quad i = 2,\dots,k-1, \\
h_k(x) &= \prod_{j=1}^{k-1} (1 - q_j(x)).
\end{align*}

\medskip
\noindent
A direct computation then shows that
\[
h_i(x) = p_i(x), \qquad i = 1,\dots,k.
\]

\medskip
\noindent
By the classical universal approximation theorem for feedforward neural networks
(e.g.\ \cite{cybenko1989approximation,hornik1989multilayer}),
for every $\delta > 0$ there exist neural networks
$\theta_i : K \to \mathbb{R}$ such that
\begin{equation}
\label{eq:delta}
\|\theta_i - \phi_i\|_{C(K)} < \delta,
\quad i = 1,\dots,k-1.
\end{equation}

\medskip
\noindent
Since $\phi_i$ is continuous on the compact set $K$, its range is contained in a
compact interval $[-M,M] \subset \mathbb{R}$. Choosing $\delta$ sufficiently small in
\eqref{eq:delta}, we may assume that $\theta_i(K) \subset [-M-1,M+1]$ for all $i$.
Since $g$ is continuous, it is uniformly continuous on the compact interval
$[-M-1,M+1]$. Therefore, for every $\eta>0$ there exists $\delta>0$ such that
\[
\|g(\theta_i)-g(\phi_i)\|_{C(K)} < \eta
\quad \text{for all } i.
\]

\medskip
\noindent
Let $\widetilde h_1,\dots,\widetilde h_k$ denote the partition functions obtained
by replacing $\phi_i$ with $\theta_i$ in the above construction. Since the ranges
of $\widetilde h_i$ and $h_i$ are contained in $[0,1]$, and since each $h_i$ and
$\widetilde h_i$ is a finite product of the functions $q_j$, $1-q_j$,
$g(\theta_j)$, and $1-g(\theta_j)$, all of which take values in $[0,1]$,
choosing $\eta>0$ sufficiently small depending on $k$ and $\varepsilon$, and
possibly reducing $\delta$ in \eqref{eq:delta}, we obtain
\[
\max_{1\le i\le k} \|\widetilde h_i - h_i\|_{C(K)} < \varepsilon.
\]

\medskip
\noindent
To make this explicit, consider the second partition function:
\[
\widetilde h_2
=
\bigl(1-g(\theta_1)\bigr)g(\theta_2)
\quad\text{and}\quad
h_2
=
\bigl(1-g(\phi_1)\bigr)g(\phi_2).
\]
We write
\begin{align*}
\widetilde h_2-h_2
&=
\bigl(1-g(\theta_1)\bigr)g(\theta_2)
-
\bigl(1-g(\phi_1)\bigr)g(\phi_2)
\\
&=
\bigl(g(\phi_1)-g(\theta_1)\bigr)g(\theta_2)
+
\bigl(1-g(\phi_1)\bigr)
\bigl(g(\theta_2)-g(\phi_2)\bigr).
\end{align*}
Taking supremum norms and using that all factors take values in $[0,1]$,
we obtain
\begin{align*}
\|\widetilde h_2-h_2\|_{C(K)}
&\le
\|g(\theta_1)-g(\phi_1)\|_{C(K)}
+
\|g(\theta_2)-g(\phi_2)\|_{C(K)}.
\end{align*}
The same telescoping-product argument applies to the remaining partition
functions.

\medskip
\noindent
Since $h_i=p_i$, this yields
\[
\max_{1\le i\le k} \|\widetilde h_i - p_i\|_{C(K)} < \varepsilon,
\]
which completes the proof.
\end{proof}

\begin{corollary}
\label{cor:punn_density_simplex}
Under the assumptions of Theorem~\ref{thm:punn_density}, the PUNN
partition functions are dense in the space of continuous probability
maps
\[
p:K\to\Delta^{k-1}.
\]
That is, for every continuous probability map
\[
p=(p_1,\ldots,p_k):K\to\Delta^{k-1}
\]
and every $\varepsilon>0$, there exist
$\theta_1,\ldots,\theta_{k-1}\in\mathcal N$ such that the corresponding
PUNN partition functions $h_1,\ldots,h_k$ satisfy
\[
\max_{1\leq i\leq k}
\|h_i-p_i\|_{C(K)}
<
\varepsilon.
\]
\end{corollary}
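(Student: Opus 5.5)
The plan is to reduce the corollary to Theorem~\ref{thm:punn_density} by a two-step approximation: first push $p$ slightly into the relative interior of the simplex, then apply the theorem to the perturbed map.

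\textbf{Step 1: perturb into the interior.} Fix $\varepsilon>0$ and $p:K\to\Delta^{k-1}$ continuous. For $t\in(0,1)$ define
\[
p^{t}(x)=(1-t)\,p(x)+t\,u,\qquad u=\bigl(\tfrac1k,\ldots,\tfrac1k\bigr).
\]
Each coordinate satisfies $p^t_i(x)\ge t/k>0$, and $\sum_i p^t_i(x)=(1-t)+t=1$. Hence $p^t:K\to\operatorname{ri}(\Delta^{k-1})$, and $p^t$ is continuous as an affine combination of continuous maps. Moreover,
\[
|p^t_i(x)-p_i(x)|=t\,|1/k-p_i(x)|\le t
\]
uniformly on $K$. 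Choosing $t=\varepsilon/2$ (with $t<1$) gives $\max_i\|p^t_i-p_i\|_{C(K)}\le\varepsilon/2$.

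\textbf{Step 2: apply the theorem.} Theorem~\ref{thm:punn_density} applies to $p^t$ with tolerance $\varepsilon/2$. It yields $\theta_1,\ldots,\theta_{k-1}\in\mathcal N$ whose PUNN partition functions satisfy $\max_i\|h_i-p^t_i\|_{C(K)}<\varepsilon/2$.

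\textbf{Step 3: combine.} By the triangle inequality,
\[
\max_i\|h_i-p_i\|_{C(K)}\le\max_i\|h_i-p^t_i\|_{C(K)}+\max_i\|p^t_i-p_i\|_{C(K)}<\varepsilon/2+\varepsilon/2=\varepsilon.
\]

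\textbf{Main obstacle.} The only delicate point is that the theorem's explicit construction breaks down on the boundary of the simplex. There $\gamma_i$ may take the values $0$ or $1$, and $g^{-1}$ is unbounded near these values, so the theorem cannot be applied to $p$ directly. The convex mixing with the barycenter $u$ avoids this cleanly, because it keeps every coordinate bounded below by the constant $t/k$ uniformly on $K$. Consequently $\sum_{j\ge i}p^t_j$ is bounded away from $0$, each $\gamma_i$ stays in a compact subinterval of $(0,1)$, and the hypotheses of the theorem are met.
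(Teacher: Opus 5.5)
Your proposal is correct and matches the paper's proof: the paper also mixes with the barycenter via $p_i^{(\delta)}=(1-\delta)p_i+\delta/k$, applies Theorem~\ref{thm:punn_density} with tolerance $\varepsilon/2$, and concludes by the triangle inequality. The only difference is that the paper takes $0<\delta<\min\{1,\varepsilon/2\}$, which also covers the case $\varepsilon\ge 2$ that your choice $t=\varepsilon/2$ does not.
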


\begin{proof}
Let $\varepsilon>0$. For $0<\delta<1$, define
\[
p_i^{(\delta)}(x)
=
(1-\delta)p_i(x)+\frac{\delta}{k},
\qquad i=1,\ldots,k.
\]
Then
\[
p^{(\delta)}
=
\bigl(p_1^{(\delta)},\ldots,p_k^{(\delta)}\bigr)
:
K\to\operatorname{ri}(\Delta^{k-1})
\]
is continuous. Indeed,
\[
p_i^{(\delta)}(x)
\geq
\frac{\delta}{k}
>
0
\]
for every $i$ and every $x\in K$, and
\[
\sum_{i=1}^k p_i^{(\delta)}(x)
=
(1-\delta)\sum_{i=1}^k p_i(x)+\delta
=
1.
\]
Moreover,
\[
\|p_i^{(\delta)}-p_i\|_{C(K)}
=
\delta
\left\|
\frac{1}{k}-p_i
\right\|_{C(K)}
\leq
\delta,
\]
since $0\leq p_i(x)\leq 1$.

Choose
\[
0<\delta<\min\left\{1,\frac{\varepsilon}{2}\right\}.
\]
By Theorem~\ref{thm:punn_density}, there exist
$\theta_1,\ldots,\theta_{k-1}\in\mathcal N$ such that the corresponding
PUNN partition functions satisfy
\[
\max_{1\leq i\leq k}
\|h_i-p_i^{(\delta)}\|_{C(K)}
<
\frac{\varepsilon}{2}.
\]
Therefore, for every $i$,
\[
\|h_i-p_i\|_{C(K)}
\leq
\|h_i-p_i^{(\delta)}\|_{C(K)}
+
\|p_i^{(\delta)}-p_i\|_{C(K)}
<
\frac{\varepsilon}{2}+\delta
<
\varepsilon.
\]
\end{proof}

\begin{remark}[Multi-output neural realization]
\label{rmk:multioutput_density}
The same density result holds for the multi-output neural realization.
Indeed, in the proof of Theorem~\ref{thm:punn_density}, the target gate
arguments define a continuous vector-valued function
\[
\Phi
=
(\phi_1,\ldots,\phi_{k-1})
:
K\to\mathbb R^{k-1}.
\]
If the class of multi-output neural networks is dense in
\[
C(K,\mathbb R^{k-1})
\]
with respect to the uniform norm, then a single network
\[
\Theta
=
(\theta_1,\ldots,\theta_{k-1})
\]
can approximate $\Phi$ uniformly. The remainder of the proof is
unchanged. Thus, the approximation results apply to both the fully
separate and multi-output neural realizations, provided the corresponding
network class has the required universal approximation property. This
does not assert universal approximation for an arbitrary fixed
multi-output architecture of prescribed finite size.
\end{remark}

\section{Realizations of the Partition}
\label{sec:realizations}

\subsection{Neural-Network Realizations}
\label{subsec:neural_realization}

A convenient way to parameterize the gates introduced above is to write
\[
q_i(x)=g(\theta_i(x)),
\qquad i=1,\ldots,k-1,
\]
where
\[
\theta_i:\mathcal X\to\mathbb R
\]
is a real-valued function, called the \emph{gate argument}, and
\[
g:\mathbb R\to[0,1]
\]
is a scalar activation function. Thus, the gate function $q_i$ is
obtained by applying the activation $g$ to the gate argument
$\theta_i$.

We consider two neural-network realizations of the gate arguments.
In the first, each function $\theta_i$ is parameterized by a separate
feedforward neural network. In the second, a single multi-output neural
network produces all gate arguments from a shared learned
representation. In the fully separate realization, one may write
\[
\theta_i(x)=\operatorname{MLP}_i(x),
\qquad i=1,\ldots,k-1,
\]
and therefore
\[
q_i(x)
=
g\bigl(\operatorname{MLP}_i(x)\bigr).
\]
Here, each gate argument $\theta_i$ is produced by its own neural
network.

A second neural realization uses a single multi-output network to
produce all gate arguments simultaneously. Specifically, let
\[
\Theta:\mathcal X\to\mathbb R^{k-1},
\qquad
\Theta(x)
=
\bigl(\theta_1(x),\ldots,\theta_{k-1}(x)\bigr).
\]
The gate outputs are then defined componentwise by
\[
q_i(x)=g\bigl(\theta_i(x)\bigr),
\qquad i=1,\ldots,k-1.
\]
In this realization, the functions $\theta_1,\ldots,\theta_{k-1}$
share the hidden representation learned by the single network
$\Theta$, while retaining distinct scalar outputs. This can
substantially reduce parameter duplication when the number of gates or
partition functions is large. The recursive partition-of-unity
construction is unchanged; only the parameterization of the gate
arguments differs.

In either neural realization, all trainable parameters are optimized
jointly through the loss function introduced in
Section~\ref{subsec:punn_classification}.

Because each gate has its own scalar argument $\theta_i$, different
gates may adapt to different geometric or statistical features of the
data. Feedforward neural networks provide flexible parameterizations of
these functions \cite{cybenko1989approximation,hornik1989multilayer}.
Under the assumptions of Section~\ref{sec:theory}, the resulting
partition-of-unity maps are dense in the space of continuous probability
maps into the simplex.

Natural choices for the scalar  activation function  include the sigmoid,
Gaussian, and compactly supported bump functions:
\[
g_{\mathrm{sig}}(t)
=
\frac{1}{1+e^{-t}},
\]
\[
g_{\mathrm{gauss}}(t)
=
e^{-t^2},
\]
and
\[
g_{\mathrm{bump}}(t)
=
\begin{cases}
\exp\left(1-\dfrac{1}{1-t^2}\right),
& |t|<1,\\[1ex]
0,
& |t|\geq 1.
\end{cases}
\]

The sigmoid gives a monotone transition across a learned level set of
$\theta_i$, whereas the Gaussian and bump activations are localized
around the level set $\theta_i(x)=0$. The bump activation additionally
vanishes when $|\theta_i(x)|\geq 1$.

More generally, different gates may use different activations:
\[
q_i(x)=g^{(i)}(\theta_i(x)),
\qquad
g^{(i)}:\mathbb R\to[0,1].
\]
Thus, monotone and localized gates may be combined within the same
model. The activation determines how the scalar argument is mapped into
$[0,1]$, while the choice of $\theta_i$ determines its variation over
the feature space. Regardless of these choices, the recursive
construction preserves the partition-of-unity property by
Proposition~\ref{prop:partition}.

\subsection{Geometry-Informed Realization}
\label{subsec:geometry_realization}

For the geometry-informed realizations below, we assume that
$\mathcal X\subseteq\mathbb R^d$. We retain the representation
\[
q_i(x)=g(\theta_i(x)),
\]
where $q_i$ is the gate function, $\theta_i$ is the gate argument, and
$g$ is the scalar activation function. In this realization, the gate
argument $\theta_i$ is parameterized by a low-dimensional geometric
model on the feature space rather than by a neural network. We first
illustrate this idea using radial and ellipsoidal parameterizations.
More general shape-informed parameterizations, including shells with
direction-dependent inner and outer radii, are developed in
Section~\ref{sec:shape_informed}.

For example, a radial gate argument may be defined by
\[
\theta_i(x)
=
s_i\bigl(r_i-\|x-c_i\|\bigr),
\]
where the center $c_i$, radius $r_i$, and scale parameter $s_i>0$ are
learned from the data. When $g$ is the sigmoid scalar activation
function, the resulting gate function
\[
q_i(x)=g(\theta_i(x))
\]
is close to one inside a soft spherical region and close to zero
outside.

Similarly, an ellipsoidal gate argument may be defined by
\[
\theta_i(x)
=
s_i\left(
1-(x-c_i)^\top A_i(x-c_i)
\right),
\]
where the center $c_i$, scale parameter $s_i>0$, and positive-definite
matrix $A_i$ are learned from the data. The matrix $A_i$ determines
the size, orientation, and principal axes of the corresponding
ellipsoidal level sets of $\theta_i$. After application of the scalar
activation function $g$, these level sets determine the geometry of the
gate function $q_i$.

In these geometry-informed realizations, the trainable parameters are
the geometric parameters defining the gate arguments rather than the
weights and biases of a multilayer perceptron. Such parameterizations
can substantially reduce the number of trainable parameters when the
chosen geometric family is well aligned with the class geometry. They
also make the geometry underlying the gate functions directly
interpretable through quantities such as centers, radii, orientations,
and principal axes.

\subsection{Ordered Gate Decomposition}
\label{subsec:hierarchical_structure}

For the basic construction with $k=C$, the recursive formulation gives
an ordered factorization of the class probabilities. For
$i=1,\ldots,C-1$,
\[
h_i(x)
=
\left(
\prod_{j=1}^{i-1}(1-q_j(x))
\right)q_i(x),
\]
while
\[
h_C(x)
=
\prod_{j=1}^{C-1}(1-q_j(x)).
\]

For a fixed input $x$, the factor $q_i(x)$ may be interpreted as the
acceptance factor associated with class $i$, while
$1-q_j(x)$, $j<i$, represents the complementary factor associated with
the preceding classes. Thus, each class probability is expressed
explicitly in terms of the gate values.

All gate functions are evaluated on the same input and their parameters are
learned jointly. Nevertheless, the factorization yields the gate trace
\[
\bigl(q_1(x),\ldots,q_{C-1}(x)\bigr),
\]
which records the quantities from which the class-probability vector is
formed.

This representation can be used to diagnose individual predictions. A
small probability for the true class may result either from a small
value of its associated gate function or from large values of preceding gates,
which produce small complementary factors. Section~\ref{sec:experiments}
illustrates this decomposition on MNIST.

Unlike softmax, which obtains class probabilities through a common
normalization of logits, the PUNN representation retains these
gate-level factors explicitly while the partition functions
$h_1,\ldots,h_C$ remain the class-probability outputs.



\subsection{Flexibility of Gate Design}

Theorem~\ref{thm:punn_density} and
Corollary~\ref{cor:punn_density_simplex} apply when the scalar activation function
\[
g:\mathbb{R}\to(0,1)
\]
is a continuous strictly monotone bijection and the gate arguments are
drawn from a neural-network class with the required universal
approximation property. In particular, these results apply when the scalar activation function is the sigmoid. They do not establish an analogous density result for nonmonotone activations such as the Gaussian or bump functions.

The recursive partition-of-unity construction itself is more general.
Different gates within the same PUNN may use different activation functions
and different parameterizations, allowing heterogeneous inductive biases
across classes. For example, geometry-informed gates may be used for classes
with simple geometric structure, while MLP-based gates may be used for classes
with more complex or irregular regions.

Each gate may be written as
\[
q_i(x)=g^{(i)}(\theta_i(x)),
\]
where
\[
g^{(i)}:\mathbb{R}\to[0,1]
\]
is the activation associated with the $i$th gate and $\theta_i$ is its
gate argument. The choice of activation and the choice of parameterization
play distinct roles: the activation determines how the scalar argument is
mapped into $[0,1]$, while the parameterization determines how that argument
varies over the feature space.

\textbf{Neural-network-based gates} may use either of the two
realizations described in Section~\ref{subsec:neural_realization}. In
the fully separate realization,
\[
\theta_i(x)=\operatorname{MLP}_i(x),
\]
whereas in the multi-output realization the gate arguments
$\theta_1(x),\ldots,\theta_{k-1}(x)$ are produced jointly by a single
network. In our experiments, the naming convention
\textbf{PUNN-[Activation]} specifies the scalar activation function
independently of which neural realization is used. For example,
PUNN-Sigma uses sigmoid activations with neural-network-parameterized
gate arguments.

\textbf{Geometry-informed gates} encode geometric priors directly into
$\theta_i$. Examples include spherical shells, ellipsoids, and
direction-dependent radii parameterized by Fourier series or spherical
harmonics. Such parameterizations can substantially reduce the number of
trainable parameters when the assumed geometry is appropriate.
Section~\ref{sec:shape_informed} develops these constructions in detail.

\begin{remark}[Partition-of-Unity Preservation]
The assumptions on the scalar activation function and gate arguments in
Theorem~\ref{thm:punn_density} are needed for the approximation result,
not for the partition-of-unity property itself. For any gate functions
\[
q_i:\mathcal X\to[0,1],
\qquad i=1,\ldots,k-1,
\]
the recursive construction satisfies
\[
\sum_{i=1}^k h_i(x)=1
\]
for every $x\in\mathcal X$, independently of how the gates are
parameterized.
\end{remark}

\section{Shape-Informed Gate Parameterizations}
\label{sec:shape_informed}

The geometry-informed realizations introduced in
Section~\ref{subsec:geometry_realization} can be extended to more
general class regions by allowing the relevant geometric parameters to
depend on direction. We describe here a family of gate functions based
on direction-dependent inner and outer radii. This construction includes
spherical and star-shaped regions as special cases and admits
low-dimensional parameterizations using Fourier series or spherical
harmonics.

\subsection{Direction-Dependent Shell Parametrization}

We introduce a unified framework for shape-informed gates using
direction-dependent radii in spherical coordinates. For each gate, the
geometry is described relative to a center $c_i\in\mathbb{R}^d$ and by
inner and outer radius functions on the unit sphere $S^{d-1}$.

\begin{definition}[Direction-Dependent Shell Gate]
Let $c_i\in\mathbb{R}^d$, and let
\[
r_{i,\mathrm{in}},r_{i,\mathrm{out}}
:
S^{d-1}\to\mathbb{R}_{\geq 0}
\]
be direction-dependent radius functions satisfying
\[
r_{i,\mathrm{in}}(\hat n)
<
r_{i,\mathrm{out}}(\hat n)
\qquad
\text{for every }\hat n\in S^{d-1}.
\]
For $x\neq c_i$, define
\[
\hat n_i(x)
=
\frac{x-c_i}{\|x-c_i\|}
\]
and the normalized radial coordinate
\[
t_i(x)
=
\frac{
\|x-c_i\|-r_{i,\mathrm{in}}(\hat n_i(x))
}{
r_{i,\mathrm{out}}(\hat n_i(x))
-
r_{i,\mathrm{in}}(\hat n_i(x))
}.
\]
The corresponding gate function is
\[
q_i(x)
=
g\!\bigl(t_i(x)\bigr),
\]
where
\[
g:\mathbb{R}\to[0,1]
\]
is a smooth scalar activation function.
\end{definition}
At $x=c_i$, where the direction $\hat n_i(x)$ is undefined, the gate
value may be specified separately according to the intended geometry.

The level sets
\[
t_i(x)=0
\qquad\text{and}\qquad
t_i(x)=1
\]
describe the inner and outer boundaries of the shell. Different choices
of the scalar activation function $g$ determine the profile of the gate
between and across these boundaries. In particular, one may choose a
smooth function $g:\mathbb R\to[0,1]$ supported on $[0,1]$, so that
$q_i(x)$ vanishes outside the shell region
\[
\Omega_i
=
\left\{
c_i+r\hat n:
\hat n\in S^{d-1},\;
r_{i,\mathrm{in}}(\hat n)
\leq r
\leq
r_{i,\mathrm{out}}(\hat n)
\right\}.
\]

\subsection{Special Cases}

The direction-dependent shell construction includes several familiar
geometric families.

\paragraph{Spherical shells and balls.}
If the inner and outer radius functions are constant,
\[
r_{i,\mathrm{in}}(\hat n)=r_{i,\mathrm{in}},
\qquad
r_{i,\mathrm{out}}(\hat n)=r_{i,\mathrm{out}},
\]
the resulting geometry is a spherical shell centered at $c_i$.
Ball-type gates are obtained by using a single outer radial boundary,
for example through a gate argument of the form
\[
\theta_i(x)
=
s_i\bigl(r_i-\|x-c_i\|\bigr),
\]
as in Section~\ref{subsec:geometry_realization}.

\paragraph{Ellipsoidal regions.}
Ellipsoidal geometries may be obtained by allowing the radial boundary
to vary with direction according to a quadratic form. This recovers the
ellipsoidal gate arguments described in
Section~\ref{subsec:geometry_realization}.

\paragraph{Star-shaped regions in $\mathbb R^2$.}
In two dimensions, direction-dependent radii may be represented by
truncated Fourier series. For example,
\[
r_i(\theta)
=
a_{i,0}
+
\sum_{m=1}^{M}
\left(
a_{i,m}\cos(m\theta)
+
b_{i,m}\sin(m\theta)
\right).
\]
Using separate expansions for the inner and outer radii gives
star-shaped shell regions of increasing geometric complexity as
$M$ increases.

\paragraph{Higher-dimensional star-shaped regions.}
In dimensions $d\geq 3$, direction-dependent radii may instead be
expanded in spherical harmonics,
\[
r_i(\hat n)
=
\sum_{\ell=0}^{L}
\sum_m
a_{i,\ell,m}Y_\ell^m(\hat n).
\]
The truncation degree $L$ controls the geometric complexity of the
resulting family.

\section{Experiments}
\label{sec:experiments}

We illustrate the PUNN construction on synthetic and standard
classification datasets. The experiments are intended to demonstrate
the learned partition-of-unity structure, the effect of class ordering,
the use of shared gate representations, and the extension to multiple
partition functions per class. We also evaluate the shape-informed gate
parameterizations developed in Section~\ref{sec:shape_informed}.

\subsection{Synthetic Datasets}
\label{sec:synthetic}

We begin our empirical evaluation with synthetic two-dimensional
classification problems. These experiments serve two purposes: first, to
illustrate the ability of different PUNN realizations to learn nonlinear class regions; and second, to visualize the learned partition functions, which are most transparent in low-dimensional feature spaces.

\subsubsection{Datasets}

We consider four binary classification datasets of varying geometric complexity:
\textbf{Moons}, consisting of two interleaving half-circles generated with
the standard \texttt{make\_moons} dataset generator;
\textbf{Circles}, consisting of two concentric circles generated with
\texttt{make\_circles} with factor $0.5$;
\textbf{XOR}, consisting of four Gaussian clusters centered at
$(\pm1,\pm1)$ with XOR labeling; and
\textbf{Helix}, consisting of two interleaved spirals with two full rotations,
parameterized as
$(t\cos t,t\sin t)$ and $(t\cos(t+\pi),t\sin(t+\pi))$
for $t\in[0,4\pi]$.

Each dataset contains 1{,}000 samples with additive Gaussian noise
($\sigma=0.1$). We use an 80/20 train/test split and standardize features to
zero mean and unit variance. In the experimental figures, class labels are
indexed from $0$ to $K-1$ to match the dataset and implementation conventions.

\subsubsection{Models}

We evaluate three versions of the fully separate neural realization
described in Section~\ref{subsec:neural_realization}, differing only in
the scalar activation function used to form the gate outputs.

\textbf{PUNN-Sigma.}
Each gate has the form
\[
q_i(x)=g_{\mathrm{sig}}(\theta_i(x)),
\]
where $\theta_i$ is a two-hidden-layer MLP with ReLU activations and
hidden-layer dimension $32$, giving $1{,}185$ parameters per gate.

\textbf{PUNN-Bump.}
The same neural parameterization is used with the compactly supported
bump activation
\[
g_{\mathrm{bump}}(t)
=
\begin{cases}
\exp\left(1-\dfrac{1}{1-t^2}\right),
& |t|<1,\\[1ex]
0,
& |t|\geq 1.
\end{cases}
\]
For Helix, the hidden-layer dimension is increased to $128$ to
accommodate the more complicated spiral geometry.

\textbf{PUNN-Gaussian.}
Each gate is given by
\[
q_i(x)
=
\exp\bigl(-\theta_i(x)^2\bigr),
\]
with the same gate-argument architecture as PUNN-Sigma and
$1{,}185$ parameters per gate.

\textbf{MLP Baseline.}
The baseline is a two-hidden-layer MLP with ReLU activations,
hidden-layer dimension $32$, and a softmax output, with $1{,}218$
trainable parameters.

Since all synthetic datasets are binary, PUNN uses a single gate,
\[
h_0(x)=q_0(x),
\qquad
h_1(x)=1-q_0(x).
\]
Thus, these experiments isolate the effect of the scalar activation
function while keeping the underlying neural parameterization largely
fixed.

\subsubsection{Training Protocol}

All models are trained for $200$ epochs using Adam
\cite{kingma2014adam}. The PUNN parameters are learned by minimizing
cross-entropy with respect to the partition probabilities, and the MLP
baseline is trained using the corresponding softmax cross-entropy loss.
The reported results are from a single training run with random seed $42$.

\subsubsection{Results}

Table~\ref{tab:synthetic} reports test accuracy for all methods. On these
four synthetic datasets, all PUNN variants achieve accuracy comparable to
the MLP baseline, with test accuracies ranging from $98.5\%$ to $100\%$.

\begin{table}[t]
\caption{Single-run test accuracy (\%) on the synthetic datasets
(random seed $42$). All PUNN variants achieve performance comparable to
the MLP baseline on these illustrative low-dimensional problems.}
\label{tab:synthetic}
\vskip 0.15in
\begin{center}
\begin{small}
\begin{sc}
\begin{tabular}{lcccc}
\toprule
Dataset & PUNN-Sigma & PUNN-Bump & PUNN-Gaussian & MLP \\
\midrule
Moons   & 100.0 & 100.0 & 100.0 & 100.0 \\
Circles & 99.0  & 99.0  & 98.5  & 98.5  \\
XOR     & 100.0 & 100.0 & 100.0 & 100.0 \\
Helix   & 100.0 & 99.5  & 98.5  & 99.5  \\
\midrule
Parameters & 1{,}185 & 1{,}186 & 1{,}185 & 1{,}218 \\
\bottomrule
\end{tabular}
\end{sc}
\end{small}
\end{center}
\vskip -0.1in
\end{table}

\subsubsection{Visualizing the Partition of Unity}

By construction, the PUNN outputs form a partition of unity:
\[
\sum_{i=0}^{K-1} h_i(x)=1
\]
for every input $x$. In the binary case considered here,
\[
h_0(x)+h_1(x)=1,
\]
so the two class-probability functions are complementary.

Figure~\ref{fig:synthetic_partition_functions} visualizes the partition
functions $h_0(x)$ and $h_1(x)$ learned by PUNN-Sigma on each dataset. Because the two partition functions are complementary, high values of one correspond to low values of the other, and their transition region determines the learned decision boundary. These visualizations illustrate how the learned partition of unity varies across the
feature space.

\begin{figure}[htbp]
\centering
\includegraphics[height=0.85\textheight,keepaspectratio]{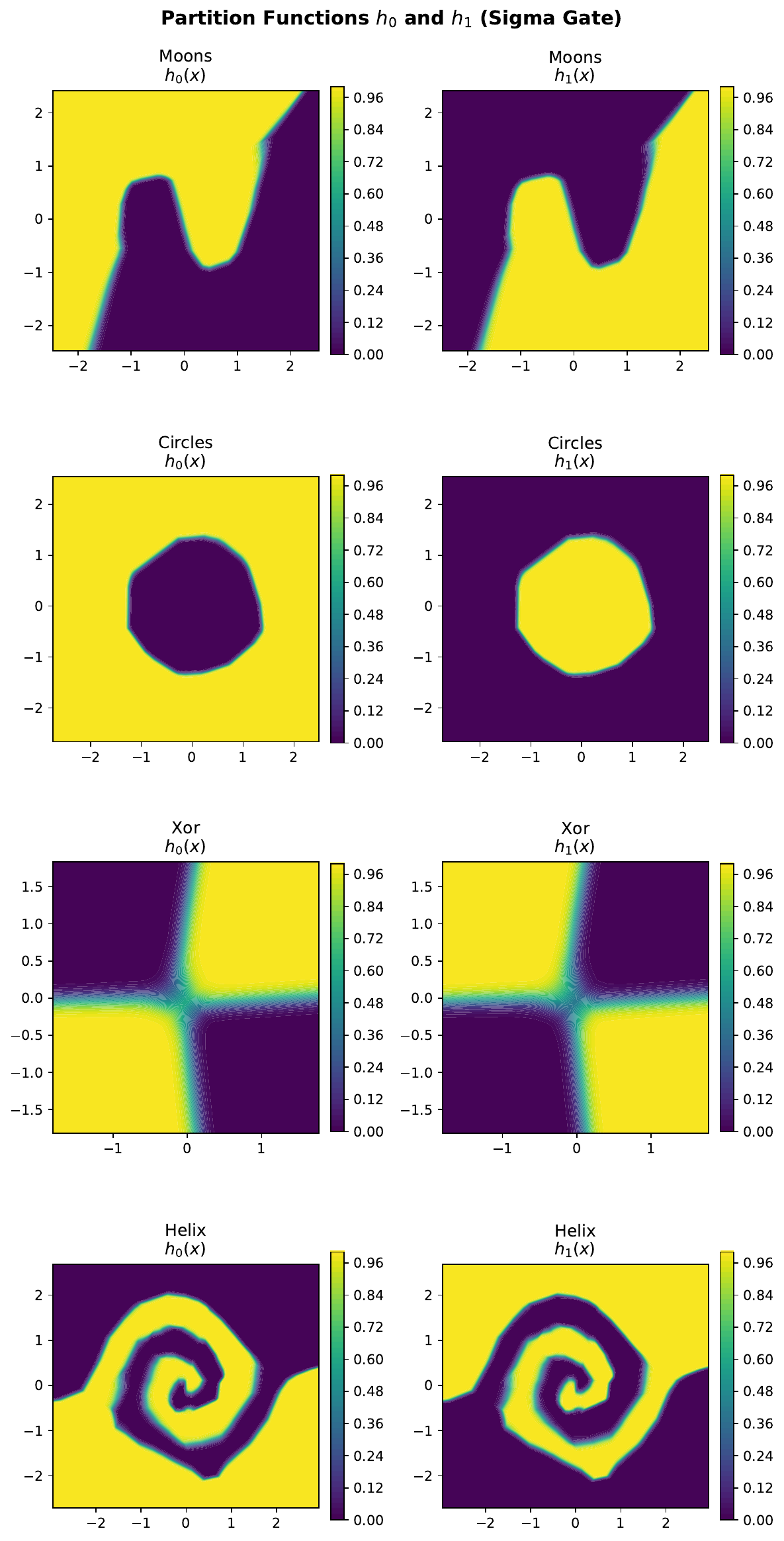}
\caption{Partition functions $h_0(x)$ and $h_1(x)$ learned by PUNN-Sigma.
Left column: $h_0(x)$; right column: $h_1(x)$. Rows correspond to Moons,
Circles, XOR, and Helix. The two functions satisfy
$h_0(x)+h_1(x)=1$ for every input $x$.}
\label{fig:synthetic_partition_functions}
\end{figure}

\FloatBarrier

\subsection{Real-World Datasets}
\label{sec:real_data}

We next evaluate PUNN on MNIST and CIFAR-100. These experiments examine
the partition-of-unity construction beyond the low-dimensional synthetic
examples and illustrate its behavior on standard multiclass
classification problems.

\subsubsection{MNIST Handwritten Digits}

\paragraph{Dataset.}
MNIST \cite{lecun1998mnist} consists of 70{,}000 grayscale images of
handwritten digits from classes $0$ through $9$, each of size
$28\times 28$ pixels. We use the standard split of 60{,}000 training images
and 10{,}000 test images. Each image is flattened into a
$784$-dimensional vector, and the input features are standardized using the
training-set mean and standard deviation.

\paragraph{Model.}
We use PUNN-Sigma for the MNIST experiments.

\textbf{PUNN-Sigma.}
Since MNIST has ten classes, the basic PUNN construction uses nine
sigmoid gate functions,
\[
q_i(x)=g_{\mathrm{sig}}(\theta_i(x)),
\qquad i=1,\ldots,9.
\]
For the gate-trace figures below, we relabel the indices from $0$ to $9$
to agree with the conventional MNIST digit labels.

Each gate argument $\theta_i$ is represented by a two-hidden-layer
feedforward neural network with ReLU activations. We denote by $H_g$
the dimension of each hidden layer; equivalently, the corresponding
weight matrices have dimensions
\[
W^{(g)}_1\in\mathbb R^{H_g\times 784},
\qquad
W^{(g)}_2\in\mathbb R^{H_g\times H_g},
\qquad
W^{(g)}_3\in\mathbb R^{1\times H_g},
\]
with bias vectors of the corresponding dimensions.

\paragraph{Training.}
The PUNN models are trained for $20$ epochs using Adam
\cite{kingma2014adam}, with learning rate $0.001$ and batch size $128$,
by minimizing the cross-entropy loss associated with the partition
functions $h_i$.

\medskip
\noindent\textbf{Interpretable Gate Trace.}
\medskip

Beyond classification accuracy, the ordered factorization of the PUNN
probabilities provides additional information about how an individual
prediction is formed. To illustrate this feature, we examine the gate
values and resulting partition probabilities for two representative
MNIST test images: one correctly classified but uncertain example and
one misclassified example.

\begin{figure}[htbp]
\centering
\includegraphics[width=\linewidth]{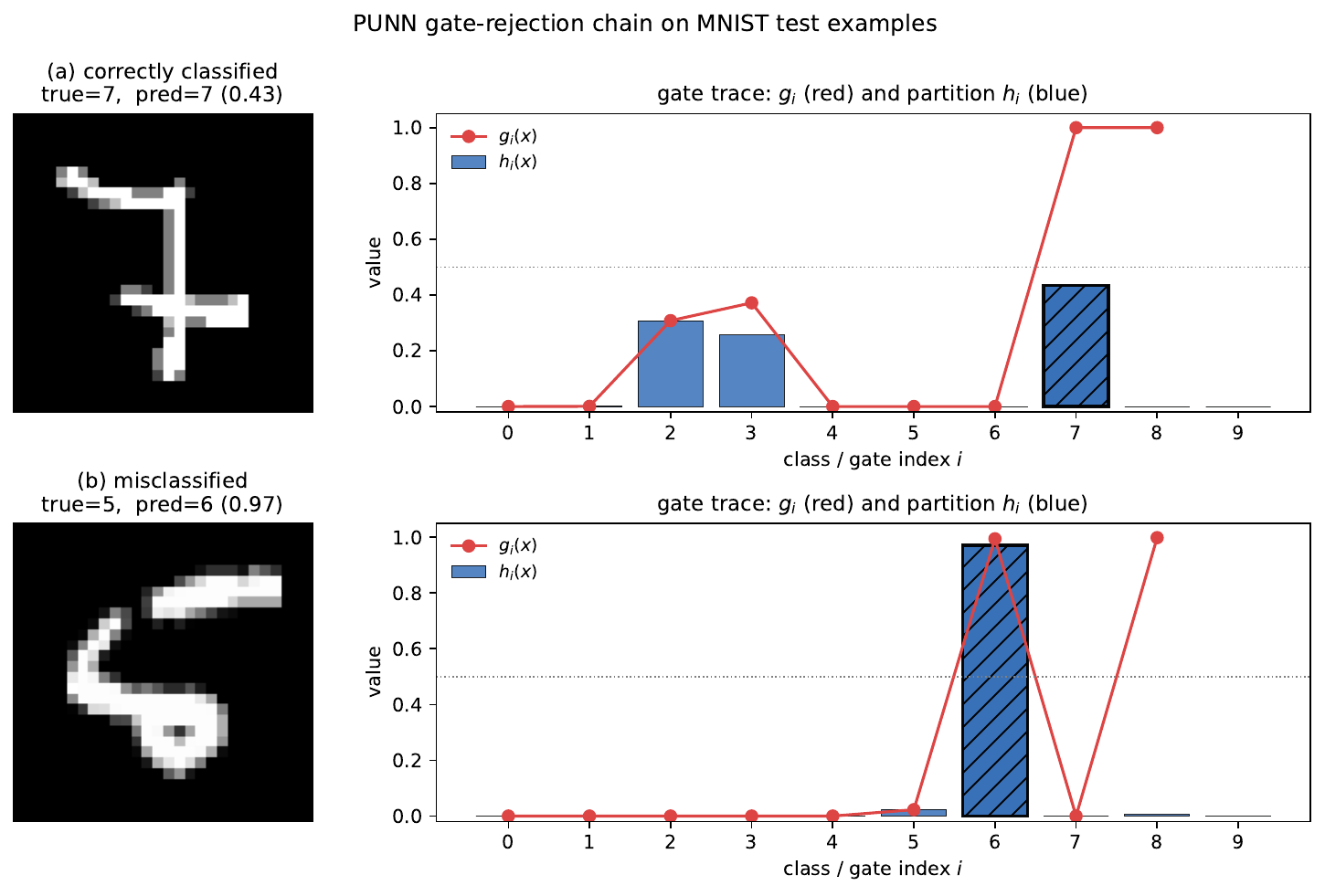}
\caption{PUNN gate-level diagnostic traces on two MNIST test examples.
\textbf{(a)}~A correctly classified but uncertain ``7'' (predicted with
confidence $0.43$). Gates~2 and~3 take intermediate values
($q_2=0.31$, $q_3=0.37$), producing class probabilities of $30.8\%$
and $25.7\%$ for classes~2 and~3, respectively, while the factors
associated with gate~7 produce a class~7 probability of $43\%$. The
trace exposes the model's ambiguity directly.
\textbf{(b)}~A misclassified example (true class~5, predicted~6 with
confidence $0.97$). The gate associated with the true class produces
only $q_5=0.023$, while the factors associated with class~6 produce
nearly all of the final probability. Red curve: gate values $q_i(x)$.
Blue bars: partition probabilities $h_i(x)$. Hatched bar: predicted
class.}
\label{fig:mnist_trace}
\end{figure}

Consider an uncertain test image of the digit ``7'' (Figure~\ref{fig:mnist_trace}a).
The ordered PUNN gate trace is
\[
q_0=0.00,\; q_1=0.00,\; q_2=\mathbf{0.31},\; q_3=\mathbf{0.37},\;
q_4{=}q_5{=}q_6=0.00,\; q_7=\mathbf{1.00},
\]
yielding partition probabilities
$h_2 = 0.31$, $h_3 = 0.26$, $h_7 = 0.43$ (others $\approx 0$).
The model's prediction is class~7, but the trace makes the residual ambiguity explicit:
gates~2 and~3 take intermediate values, indicating that the image
receives nonnegligible probability contributions from those classes,
while the complementary factors of the preceding gates and the value
$q_7=1$ produce the remaining probability assigned to class~7.
A misclassified example (Figure~\ref{fig:mnist_trace}b: true class~5, predicted~6) localizes the failure to a single gate:
$q_5 = 0.023$, so the gate associated with class~5 contributes very
little to the resulting class probability, while the resulting factors produce nearly all of the probability assigned to class~6.
Because PUNN learns the gate functions directly, the resulting trace is
a model-native representation of the factors forming the class
probabilities rather than a post-hoc decomposition applied after
prediction.

\medskip
\noindent\textbf{Sensitivity to class ordering.}
\label{sec:class_ordering}
\medskip

Because the recursive construction assigns classes to successive
partition positions, we examine whether predictive performance is
sensitive to the chosen class ordering. We evaluate PUNN-Sigma under
several different class orderings, with positions indexed from $0$ to $9$.

We evaluate ten class orderings, including the identity ordering
and nine randomly generated permutations. To separate ordering effects
from ordinary training variability, every ordering is trained using the
same three random seeds. Table~\ref{tab:mnist_random_ordering} reports
the resulting test accuracies.

\begin{table}[t]
\caption{MNIST test accuracy under ten class orderings. Each ordering is
evaluated using the same three random seeds. Entries report mean
$\pm$ standard deviation over the three seeds.}
\label{tab:mnist_random_ordering}
\vskip 0.1in
\begin{center}
\begin{small}
\begin{sc}
\begin{tabular}{lc}
\toprule
Class Ordering & Test Accuracy (\%) \\
\midrule
Identity      & $97.47 \pm 0.25$ \\
Permutation 1 & $97.41 \pm 0.13$ \\
Permutation 2 & $97.60 \pm 0.20$ \\
Permutation 3 & $97.45 \pm 0.11$ \\
Permutation 4 & $97.36 \pm 0.16$ \\
Permutation 5 & $97.30 \pm 0.15$ \\
Permutation 6 & $97.51 \pm 0.10$ \\
Permutation 7 & $97.26 \pm 0.10$ \\
Permutation 8 & $97.33 \pm 0.23$ \\
Permutation 9 & $97.40 \pm 0.11$ \\
\bottomrule
\end{tabular}
\end{sc}
\end{small}
\end{center}
\end{table}

The ordering-specific mean accuracies range from $97.26\%$ to
$97.60\%$. Their standard deviation is $0.10$ percentage points, and
their total range is $0.34$ percentage points. By comparison, the
average seed-to-seed standard deviation within a fixed ordering is
$0.15$ percentage points. Thus, under this experimental configuration, the effect of class ordering on test accuracy is small and comparable to the ordinary variability between training runs.

\FloatBarrier

\subsubsection{CIFAR-100}

We next consider CIFAR-100, which has $100$ classes. This experiment
illustrates the extension in Remark~\ref{rmk:multimodal}, where the
number of partition functions may exceed the number of classes.

We use the multi-output neural realization of
Section~\ref{subsec:neural_realization} with $2{,}000$ partition
functions, assigning $20$ partition functions to each class. The
probability of a class is obtained by summing the corresponding
partition functions. The resulting PUNN has $6{,}003{,}207$ trainable
parameters and is compared with a softmax CNN having a closely matched
parameter count of $5{,}887{,}971$.

Both models are trained on the standard CIFAR-100 training set and
evaluated on the standard test set. Results are reported over three
random seeds.

\begin{table}[H]
\caption{CIFAR-100 comparison over three random seeds. PUNN uses
$2{,}000$ partition functions, corresponding to $20$ partition
functions per class. Test accuracy is reported as mean $\pm$ standard
deviation.}
\label{tab:cifar100}
\vskip 0.1in
\begin{center}
\begin{small}
\begin{sc}
\begin{tabular}{lcc}
\toprule
Model & Parameters & Test Accuracy (\%) \\
\midrule
PUNN-Sigma
& $6{,}003{,}207$
& $70.65 \pm 0.22$ \\
Softmax CNN
& $5{,}887{,}971$
& $\mathbf{71.76 \pm 0.30}$ \\
\bottomrule
\end{tabular}
\end{sc}
\end{small}
\end{center}
\end{table}

As shown in Table~\ref{tab:cifar100}, PUNN achieves
$70.65\pm0.22\%$ test accuracy, compared with
$71.76\pm0.30\%$ for the closely parameter-matched softmax CNN,
a difference of $1.11$ percentage points. Thus, the recursive partition-of-unity construction remains effective in this $100$-class setting, achieving test accuracy close to that of a standard softmax classifier under a comparable parameter budget.

\FloatBarrier

\subsection{Shape-Informed Experiments}
\label{subsec:shape_informed_experiments}

We compare shape-informed PUNN with neural-network-parametrized baselines
on datasets for which geometric priors are plausible. The purpose of
these experiments is to examine the trade-off between predictive
accuracy and parameter efficiency when the geometry of the class regions
is encoded directly in the gate parametrization.

\begin{table}[t]
\caption{Comparison of shape-informed PUNN with a small MLP baseline
having two hidden layers of dimension $32$. Depending on the input
dimension and number of classes, the MLP contains approximately
$1.2$K--$1.3$K trainable parameters. The shape-informed models attain
comparable accuracy, and in some cases higher accuracy, with
parameter-count reductions ranging from $33\times$ to $304\times$.}
\label{tab:shape_informed}
\vskip 0.15in
\begin{center}
\begin{small}
\begin{sc}
\begin{tabular}{llccc}
\toprule
Dataset & Model & Accuracy (\%) & Parameters & Reduction \\
\midrule
Circles & Small MLP ($[32,32]$) & 98.6 & 1{,}218 & --- \\
Circles & Spherical shell & $\mathbf{98.9 \pm 0.58}$ & \textbf{4} &
\textbf{$304\times$} \\
\midrule
Moons & Small MLP ($[32,32]$) & 100.0 & 1{,}218 & --- \\
Moons & Fourier shell & 98.6 & \textbf{24} &
\textbf{$51\times$} \\
\midrule
Concentric rings 
& Small MLP ($[32,32]$) & 100.0 & 1{,}251 & --- \\
Concentric rings 
& Spherical shells & \textbf{100.0} & \textbf{8} &
\textbf{$156\times$} \\
\midrule
Iris & Small MLP ($[32,32]$) & 93.3 & 1{,}315 & --- \\
Iris & Harmonics  & \textbf{95.3} & 40 &
$33\times$ \\
Iris & Axis-aligned ellipsoid & \textbf{95.3} & \textbf{20} &
\textbf{$66\times$} \\
\bottomrule
\end{tabular}
\end{sc}
\end{small}
\end{center}
\vskip -0.1in
\end{table}

Table~\ref{tab:shape_informed} shows that geometry-informed
parameterizations can achieve accuracy comparable to the MLP baseline
with substantially fewer trainable parameters when the assumed geometry
is well matched to the class regions. The parameter reductions range
from $33\times$ to $304\times$ across the examples considered.

The Circles and Concentric Rings experiments illustrate the efficiency
of simple radial geometries, while the Moons and Iris experiments show
how more flexible Fourier, spherical-harmonic, and ellipsoidal
parameterizations can accommodate more complicated class regions.
Figure~\ref{fig:shape_partition_functions} visualizes the corresponding
partition functions for the Circles example.

\begin{figure}[htbp]
\begin{center}
\centerline{
\includegraphics[width=\textwidth]{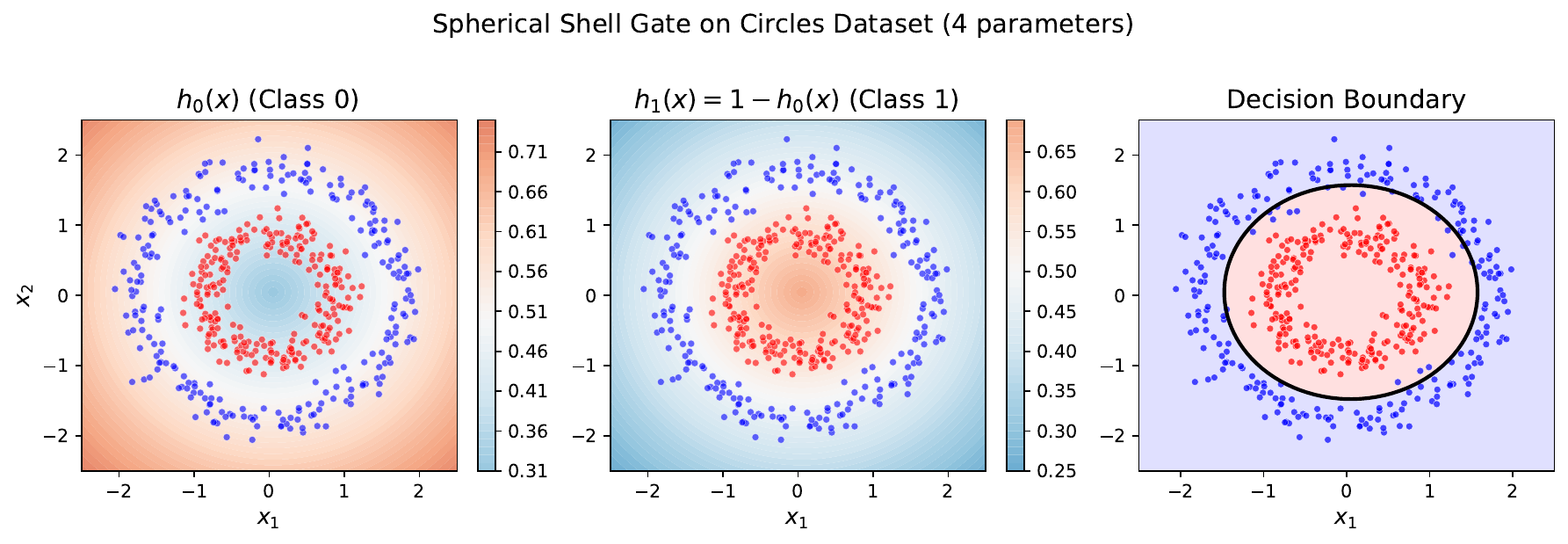}
}
\caption{Partition functions learned by the spherical-shell model on the
Circles dataset. Left: $h_0(x)$ assigns high probability to the inner
class. Middle: $h_1(x)=1-h_0(x)$ assigns high probability to the outer
class. Right: the decision boundary is given by $h_0(x)=0.5$. The two
partition functions satisfy $h_0(x)+h_1(x)=1$ for every input $x$.}
\label{fig:shape_partition_functions}
\end{center}
\vskip -0.2in
\end{figure}

\FloatBarrier

\subsubsection{Discussion}

The direction-dependent shell provides a flexible framework for encoding
geometric priors in PUNN. By varying the parameterization of
$r_{i,\mathrm{in}}(\hat n)$ and $r_{i,\mathrm{out}}(\hat n)$, one can
trade off geometric flexibility against model complexity. These
parameterizations are most natural when the class regions have known or
approximately known geometric structure and the feature dimension is
low to moderate.

In higher dimensions, the number of coefficients required to represent
general direction-dependent radii may grow rapidly. PUNN therefore also
allows hybrid constructions in which geometry-informed gates are used
for classes with suitable structure and neural-network gates are used
for more irregular class regions.

\section*{Acknowledgments}

The author thanks Soheil Kolouri for valuable discussions and feedback
on this work, and in particular for suggesting the shared multi-output
realization of the PUNN gates.

\noindent\textbf{Use of AI tools.} AI-assisted tools were used during the preparation of this work. Devin was used as a coding assistant for the implementation and verification of the numerical experiments in Section~\ref{sec:experiments}. Generative AI tools, including ChatGPT, were used to assist with editing and organization of the manuscript. The mathematical content and scientific conclusions are those of the author.

\section*{Declarations}

\noindent\textbf{Funding.} A. Aldroubi was supported in part by the National Institute of General Medical Sciences of the National Institutes of Health under Award Number R01GM130825, through a subaward from the University of Virginia.

\noindent\textbf{Competing interests.} The author declares no competing interests.

\noindent\textbf{Data availability.} The MNIST and CIFAR-100 datasets used in Section~\ref{sec:experiments} are standard, publicly available benchmark datasets. The synthetic datasets are generated according to the procedures described in Section~\ref{sec:synthetic} and can be regenerated from that description. Code implementing the PUNN architecture and the experiments reported in this paper is available from the author upon reasonable request and will be made publicly available upon acceptance.

\noindent\textbf{Author contributions.} A. Aldroubi is the sole author of this manuscript and is responsible for all aspects of the work.

\noindent\textbf{Ethics approval.} Not applicable. This study did not involve human participants, human data, or animals.

\bibliographystyle{amsplain}
\bibliography{PUNN}
\end{document}